\algnewcommand\algorithmicinput{\textbf{Input:}}
\algnewcommand\Input{\item[\algorithmicinput]}
\algnewcommand\algorithmicoutput{\textbf{Output:}}
\algnewcommand\Output{\item[\algorithmicoutput]}
\newtheorem{theorem}{Theorem}
\newtheorem{lemma}[theorem]{Lemma}
\newtheorem{definition}{Definition}
\begin{document}
	
\ifpreprint
\setcounter{page}{1}
\else
\setcounter{page}{1}
\fi

\begin{frontmatter}
			
\title{A cellular automata approach to local patterns for texture recognition}

\author[imecc]{Joao B. Florindo\corref{cor1}} 
\cortext[cor1]{Corresponding author}
\ead{jbflorindo@ime.unicamp.br}
\author[fcm]{Konradin Metze} 
\ead{kmetze@fcm.unicamp.br}

\address[imecc]{Institute of Mathematics, Statistics and Scientific Computing - University of Campinas\\
	Rua S\'{e}rgio Buarque de Holanda, 651, Cidade Universit\'{a}ria "Zeferino Vaz" - Distr. Bar\~{a}o Geraldo, CEP 13083-859, Campinas, SP, Brasil}
\address[fcm]{Faculty of Medical Sciences, State University of Campinas (UNICAMP), Campinas, Brazil}

\begin{abstract}
Texture recognition is one of the most important tasks in computer vision and, despite the recent success of learning-based approaches, there is still need for model-based solutions. This is especially the case when the amount of data available for training is not sufficiently large, a common situation in several applied areas, or when computational resources are limited. In this context, here we propose a method for texture descriptors that combines the representation power of complex objects by cellular automata with the known effectiveness of local descriptors in texture analysis. The method formulates a new transition function for the automaton inspired on local binary descriptors. It counterba\-lances the new state of each cell with the previous state, in this way introducing an idea of ``controlled deterministic chaos''. The descriptors are obtained from the distribution of cell states. The proposed descriptors are applied to the classification of texture images both on benchmark data sets and a real-world problem, i.e., that of identifying plant species based on the texture of their leaf surfaces. Our proposal outperforms other classical and state-of-the-art approaches, especially in the real-world problem, thus revealing its potential to be applied in numerous practical tasks involving texture recognition at some stage.
\end{abstract}

\begin{keyword}
Cellular automata \sep Texture recognition \sep Local binary patterns \sep Discrete dynamical system.
\end{keyword}

\end{frontmatter}

\section{Introduction}

Since its popularization in the sixties, chaos theory, in its modern sense, has attracted attention in numerous areas. Models for image processing and analysis have also been proposed, benefiting from tools originally developed for the analysis of chaotic systems \citep{GJ14,R10,GWJWZ10,SWBB15,Y17}.

Among the classical models of deterministic chaos, cellular automata (CA) \citep{W02} have been natural candidates to model digital images, mainly due to their intrinsic representation over a two-dimensional grid, which allows the association between pixel intensity and cell states. 

Whereas the literature has presented applications of CA models in image processing \citep{WT07,GJ14,R06,R10,LEA10,GS18} or general applications of pattern recognition \citep{CI06,GWJWZ10}, the use of CAs to provide image descriptors is still a topic little explored, at least in explicit terms.

Cellular automata and texture descriptors share a fundamental property: the locality. Well established texture descriptors such as Haralick features \citep{H79}, local binary patterns \citep{OPM02}, or bag of visual words \citep{VZ09}, rely on the idea of a local descriptor capable of quantifying the relation among pixels within a neighborhood. In a similar way, CAs evolve in time by applying predefined rules that essentially depend on the neighborhood states. Nevertheless, CAs add en extra ingredient to the process that is the successive application of a non-linear operation. This approach has also recently demonstrated its powerfulness in texture recognition \citep{BM13,F20}, providing a model competitive even with the state-of-the-art learning-based methods. The utility of CA models as a robust texture descriptor has also been recently verified in \citep{SWBB15} and confirmed the expectations.

Therefore we propose the theoretical development and practical application of a texture descriptor that combines the pattern recognition abilities of CAs with the advantages of a texture descriptor based on the local binary patterns (LBP) theory \citep{OPM02}. The proposed descriptors are named Cellular Automata Texture descriptors (CATex).

The proposed methodology consists in successive applications of an LBP-based operator exploring its nonlinear characteristic. The simple successive application yields, however, a rapid increase in the global complexity of the model, as expected from chaos theory. These changes disorganizes excessively  the pixel patterns of the original image and in this way hampers recognition tasks. CATex descriptors circumvent this problem by introducing a control parameter that combines the cell state in the next iteration with the state in the current iteration, thus ensuring that the ``chaoticization'' process takes place in a controlled manner.

The proposed CATex descriptors are tested on texture classification tasks, being compared both with classical texture features, like LBP \citep{OPM02} and VZ-Joint \citep{VZ09}, and with modern approaches, such as binarized statistical image features (BSIF) \citep{KR12}, scale invariant feature transform (SIFT) combined with bag-of-visual-words (BOVW) \citep{CMKMV14}, and convolutional neural networks (CNN) \citep{CMKV16}. Benchmark databases like UIUC \citep{LSP05}, UMD \citep{XJF09} and KTH-TIPS2b \citep{HCFE04} are used for comparison. The developed methodology is also applied to a ``real-world'' problem, namely, the identification of plant species based on images of the leaf surface. The classification accuracy achieved by the proposed model attests its value as an alternative for texture recognition in general. In particular, the new method will be helpful in situations where learning-based approaches are not advantageous, for instance, when there are only few data for training, which is a common problem in several areas, for example, in medical applications.  

\section{Related works}

Applications of chaos theory to image description are proposed, for example, in \citep{Y17}, where the authors develop an analytical model for the image function employing a Bezier parametric representation of the image surface and analyze complex chaotic patterns arising after successive iterations of such image function. They also provide a theoretical framework where they postulate a relation between the role that chaotic phenomena play in the brain and the interpretation of digital images.

Concepts of chaos theory have also been used for the solution of other problems related to pattern recognition, for example, in classifiers and evolutionary algorithms that can be applied to image recognition \citep{CI06,GWJWZ10}.

Cellular automata, in particular, have been used for well known tasks in image processing, like edge and spot detection and noise filtering in \citep{WT07} and the segmentation of medical images in \citep{GJ14}. Another approach based on CAs for image processing has been developed by Rosin \citep{R10}, which was inspired by works like \citep{R06}. While the  latter was focused on gray level images, the former was applied to binary  images. In both cases, a sequential floating forward search mechanism was adopted to select CA rules capable of doing basic processing tasks such as noise filtering, thinning, etc. Combinations of CA with Markov models have also been presented in \citep{LWML19} for an application in remote sensing. Cellular automata have also been used to simulate physical models representing material images \citep{GS18}. In \citep{EPAIW15}, a CA with a contextual-based transition function is employed for the classification of satellite images. Basic models similar to the classical ``game of life'' have been applied for the detection of micro-calsification in mamographies \citep{HSH13}. Image analysis and CA can also be combined in the opposite way, i.e., using image analysis to classify CA patterns \citep{SBOBB16}. More recently, a cellular neural network, which is structurally inspired in cellular automata, has been successfully employed for texture recognition in \citep{JCSZ20}.

Regarding all these investigations, it is surprising that we can hardly find CA-based approaches combining image processing and pattern recognition for image classification. The literature has presented solutions based on binarization of the image followed by modeling of a CA with classical transition functions \citep{LEA10}. The study done by Silva et al. Silva et al. \citep{SWBB15} demonstrates how a CA model can be helpful for the recognition of images, especially textures. They developed a nonlinear operator based on a material corrosion model and applied it successively over the image, collecting information from the statistical distribution of the transformed image after each iteration. 

\section{Proposed Methodology}

\subsection{Cellular automata}

Cellular automata (CA) are dynamical models defined over a discrete tesselation (usually a grid) and evolving at discrete time steps. Their main elements are formally presented in the following definition.
\begin{definition}
	A cellular automaton is represented by a sextuple $<G,S,s,s_0,N,\Phi>$, where
	\begin{itemize}
		\item $G$ is a two-dimensional grid of cells $c_i$ ($i \in \mathbb{N}$);
		\item $S$ is a finite set of states ($S \subset \mathbb{N}$);
		\item $s(c_i,t)$ is the function that provides the state of each cell $c_i$ at time $t$; 
		\item $s_0 (c_i)$ defines the initial state of each $c_i$;
		\item $N(c_i)$ is the neighborhood function, which associates each cell $c_i$ to its neighborhood (notice that neighborhood here is an abstract predefined concept and not necessarily implies spatial proximity);
		\item $\Phi(c_i,N,t)$ is the transition function, which receives as input the current state of cell $c_i$ and the states of its neighbor cells at time $t$, and outputs the state of $c_i$ in the next time step $t+1$.
	\end{itemize} 
\end{definition}

\subsection{Local binary patterns}

Local binary patterns (LBP) \citep{OPM02} are texture descriptors that rely on comparing the gray level of each pixel with that of its neighbors. In its most popular and basic version, only the sign of such difference is considered to compose the binary local codes. Here we employ a rotation-invariant version. In this approach, similar local patterns (uniform patterns) are counted as a single pattern, in this way reducing dimensionality and being more discriminative than the basic version, as demonstrated in \citep{OPM02}. Such descriptor is obtained by
\begin{equation}
LBP_{P,R}^{riu2} = \left\{
\begin{array}{ll}
\sum_{p=0}^{P-1}H(g_p-g_c)2^p & \mbox{if } U(LBP_{P,R})\geq 2\\
P+1 & \mbox{otherwise}, 
\end{array}
\right.
\end{equation}
where $g_c$ is the gray level of the reference (central) pixel, $g_p$ are the gray level of each neighbor pixel, $H$ is the Heaviside step function ($H(x)=1$ if $x>0$ and $H(x)=0$ otherwise), $R$ is the radius of the neighborhood and $P$ is the number of neighbor pixels sampled over a circle centered at the reference pixel with radius $R$. Moreover, we have the uniformity function $U$ defined by
\begin{equation}
U(LBP_{P,R}) = |H(g_{P-1}-g_c)-H(g_0-g_c)| + \sum_{p=1}^{P-1}|H(g_{p}-g_c)-H(g_{p-1}-g_c)|.
\end{equation}

\subsection{Proposed method}

Here we propose a cellular automaton model for texture recognition, whose transition function is based on $LBP_{P,R}^{riu2}$ codes. The proposed descriptors are dubbed CATex (for Cellular Automata Texture) descriptors. The elements of the cellular automaton in our model are described in the following.

The grid $G$ in this case is the grid of the image pixels, with coordinates in the space $\mathbb{Z}_{M\times N}$, where $M$ and $N$ are the image dimensions.

The CA is initialized with each cell corresponding to a pixel in the analyzed image and each state corresponding to the gray level of that pixel (usually an integer value between $0$ and $255$).

The neighborhood of each CA cell $c_i$ is determined by the $LBP_{P,R}^{riu2}$ neighborhood, i.e., by $P$ cells sampled over a circle centered at $c_i$ with radius $R$, using bilinear interpolation as in \citep{OPM02}.

The transition function is parameterized by $P$ and $R$ and performs a weighted sum involving the current cell state ($s(c_i)$) and the $LBP_{P,R}^{riu2}$ code of $c_i$:
\begin{equation}
	\Phi_{P,R}(c_i,t) = (1-\alpha)s(c_i,t-1) + \alpha LBP_{P,R}^{riu2} (c_i),
\end{equation}
where $\alpha$ plays a fundamental role in the method, controlling the strength of the action of the transition function. The nonlinearity of $LBP_{P,R}^{riu2}$ tends to rapidly evolve to a chaotic scenario. Parameter $\alpha$ acts as a chaos controller, allowing in this way for the cell states to be used as image descriptors.

The final descriptors are provided by the histograms of the LBP codes computed over the CA map at each CA time step for different combinations of parameters $P$ and $R$ in the LBP algorithm. More specifically, we employ the 9 combinations suggested in \citep{LFGWP17} and also empirically confirmed here as an efficient scheme, i.e., $(P,R) = \{(8,1),(16,2),(24,3),(24,4),(24,5),(24,6),(24,7),(24,8),\\(24,9)\}$. We also found out that 20 iterations of the CA was sufficient to provide robust and effective image descriptors. Finally, considering the high number of descriptors generated by such algorithm, a post-processing step of principal component analysis \citep{J86} was employed to reduce dimensionality.

Algorithm \ref{alg:ca} presents the pseudo-code of the process to generate the CATex descriptors and Table \ref{tab:routines} gives explanation regarding some auxiliary routines employed in Algorithm \ref{alg:ca}. Some points deserve some extra attention here. At line 4, the image is submitted to reflective padding with $R$ rows and columns added on both directions. The objective is to preserve the size of the LBP map, as the output of $LBP_{P,R}^{riu2}$ algorithm removes $R$ rows and columns from the border of the image. As here the LBP mapping is computed iteratively, such reduction would rapidly make the map very small to provide any useful descriptor. Lines 3-7 compute descriptors from the $LBP_{P,R}^{riu2}$ map of the original image and therefore correspond to the original $LBP_{P,R}^{riu2}$ descriptors as presented in \citep{OPM02}. Lines $8-10$ perform the first iteration of the CA, by computing the weighted combination of the original image (initial state of the CA) with the $LBP_{P,R}^{riu2}$ map. Variable $I_{stack}$ stores the updated CA states at each time step for each $R$ value. Lines $12-16$ compute the LBP codes of the current states of the CA and add the histogram of such codes to the descriptor vector. Lines 17-19 updates the state of the CA using again the weighted combination of the current state with the LBP map.

\begin{algorithm}[H]
	\caption{Algorithm of CATex descriptors.}
	\label{alg:ca}
	\begin{algorithmic}[1] 
		
		\Input $I$ (image normalized in $[0,1]$)
		\Output $\mathfrak{D}$ (CATex descriptors)
		\State $P \gets [8 \quad 16 \quad 24 \quad 24 \quad 24 \quad 24 \quad 24 \quad 24 \quad 24]$
		\State $\mathfrak{D} \gets \emptyset$
		\For{$R = 1 \textrm{ to } 9$}
			\State $I_{aux} \gets \textrm{padding}(I,R)$
			\State $L[R] \gets \mathrm{LBP^{riu2}}(I_{aux},P[R],R)$
			\State $\mathfrak{D} \gets \mathfrak{D} \cup \mathrm{histogram}(L[R])$
		\EndFor
		\For{$R = 1 \textrm{ to } 9$}
			\State $I_{stack}[R] \gets (1-\alpha)I + \alpha L[R]$
		\EndFor
		\For{$k = 1 \textrm{ to } 20$}
			\For{$R = 1 \textrm{ to } 9$}
				\State $I_{aux} \gets \textrm{padding}(I_{stack}[R],R)$
				\State $L[R] \gets \mathrm{LBP^{riu2}}(I_{aux},P[R],R)$
				\State $\mathfrak{D} \gets \mathfrak{D} \cup \mathrm{histogram}(L[R])$
			\EndFor
			\For{$R = 1 \textrm{ to } 9$}
				\State $I_{stack}[R] \gets (1-\alpha)I_{stack}[R] + \alpha L[R]$
			\EndFor						
		\EndFor				
		
	\end{algorithmic}
\end{algorithm}

\begin{table}[h]	
	\centering
	\caption{Auxiliary routines used by Algorithm \ref{alg:ca}.}	
	\begin{tabular}{ll}
		\hline
		$\textrm{padding}(I,r)$ & Symmetric padding of image $I$ with mirror reflections of itself and size $r$ \\
		$\textrm{LBP}^{riu2}(I,P,R)$ & $LBP_{P,R}^{riu2}$ map of image $I$ as described in \citep{OPM02} (normalized in $[0,1]$) \\
		$\mathrm{histogram}(L)$ & Histogram of LBP map $L$, counting all possible LBP code values\\
		\hline
	\end{tabular}
	\label{tab:routines}	
\end{table}

Figure \ref{fig:method} visually illustrates the resulting outcomes of the proposed method at two specific iterations of the cellular automata: first and fifth iterations. To avoid an excessively cluttered image, we show only the cases $R=1$ and $R=9$. In the proposal, all integer values between 2 and 8 for $R$ are also used. The ``blurring'' effect in the iterated image is caused by the reduced number of $LBP_{P,R}^{riu2}$ possible codes. Those codes have more and more weight in the iterated map as the CA evolves.
\begin{figure}
	\includegraphics[width=\textwidth]{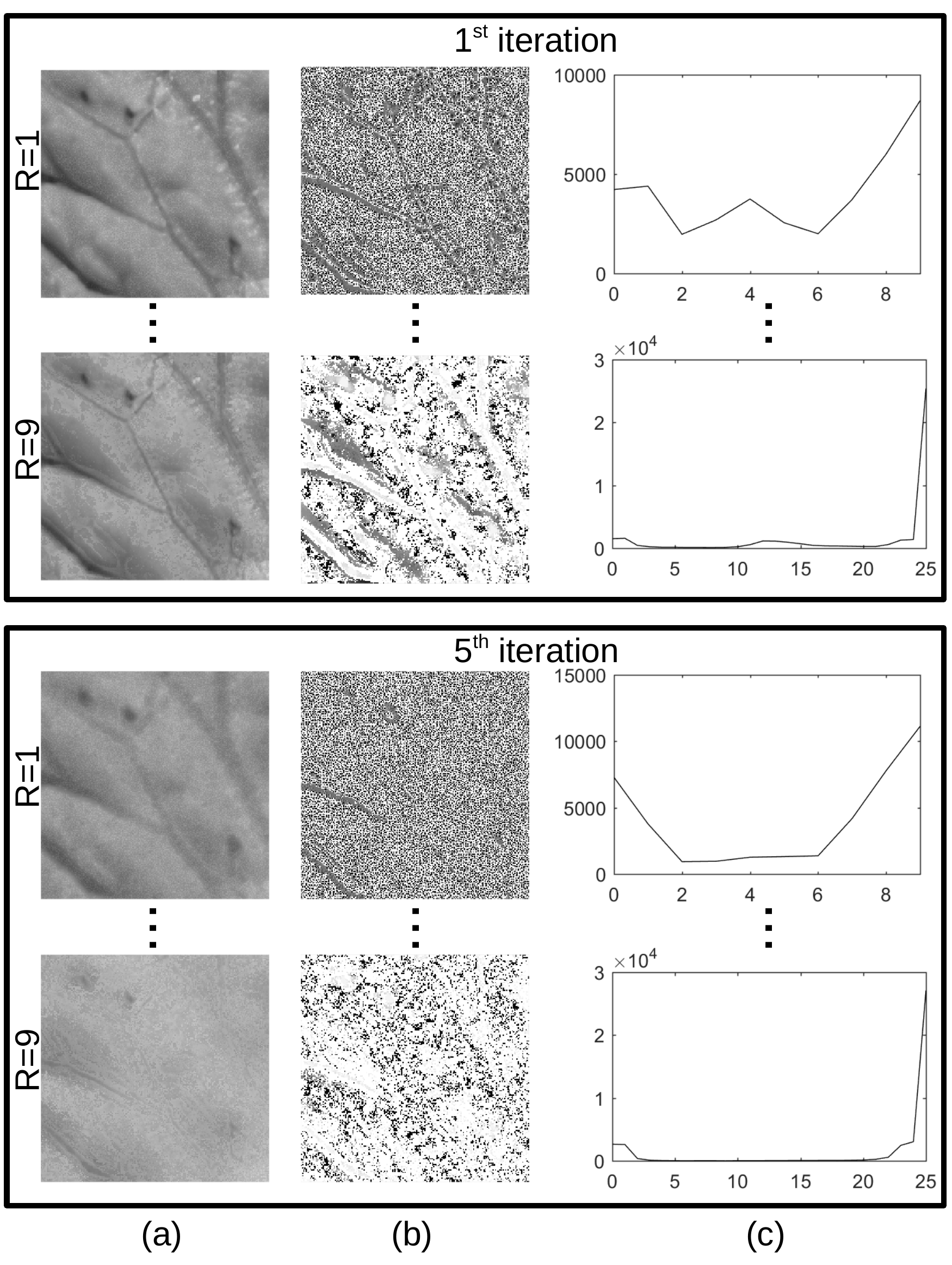}
	\caption{Outcomes generated by iterations 1 and 5 in the proposed method ($\alpha=0.10$). (a) Iterated image. (b) LBP codes. (c) LBP histogram used to compose the final descriptors.}
	\label{fig:method}
\end{figure}

\subsection{Formal analysis}\label{sec:motivation}

LBP encoding can be seen as a non-linear function applied to each pixel neighborhood. We will show such operation more precisely now for the basic LBP method with an 8-neighborhood. 

Given a central pixel value $g_c$ and its 8-neigbors $g_i$ taken in an appropriate direction we have
\begin{equation}
	L(g_c;g_i) = \sum_{i=0}^{7}H(g_i-g_c)2^i,
\end{equation}
where $H$ stands for the Heaviside step function. The value $g_i-g_c$ resembles in many aspects the idea of local derivative. To facilitate the comprehension we reduce the operation to the one-dimensional domain. In this case we may have at each position $x_i$:
\begin{equation}
	L(x_i) = H(x_{i-1}-x_i)2^0 + H(x_{i+1}-x_i)2^1.
\end{equation}
This expression can be easily rewritten in terms of the classical first order finite difference operator $D$, which mimics first derivative in a discrete space. This is classically defined by
\begin{equation}
	D(x_i) = x_{i+1} - x_i
\end{equation}
and therefore $L$ can be expressed as
\begin{equation}
	L(x_i) = H(D(x_{i-1})) - 2H(D(x_i)).
\end{equation}
Figure \ref{fig:diffLBP} illustrates how these operators act on the discrete function and their obvious similarities. 
\begin{figure}
	\centering
	\includegraphics[width=.5\textwidth]{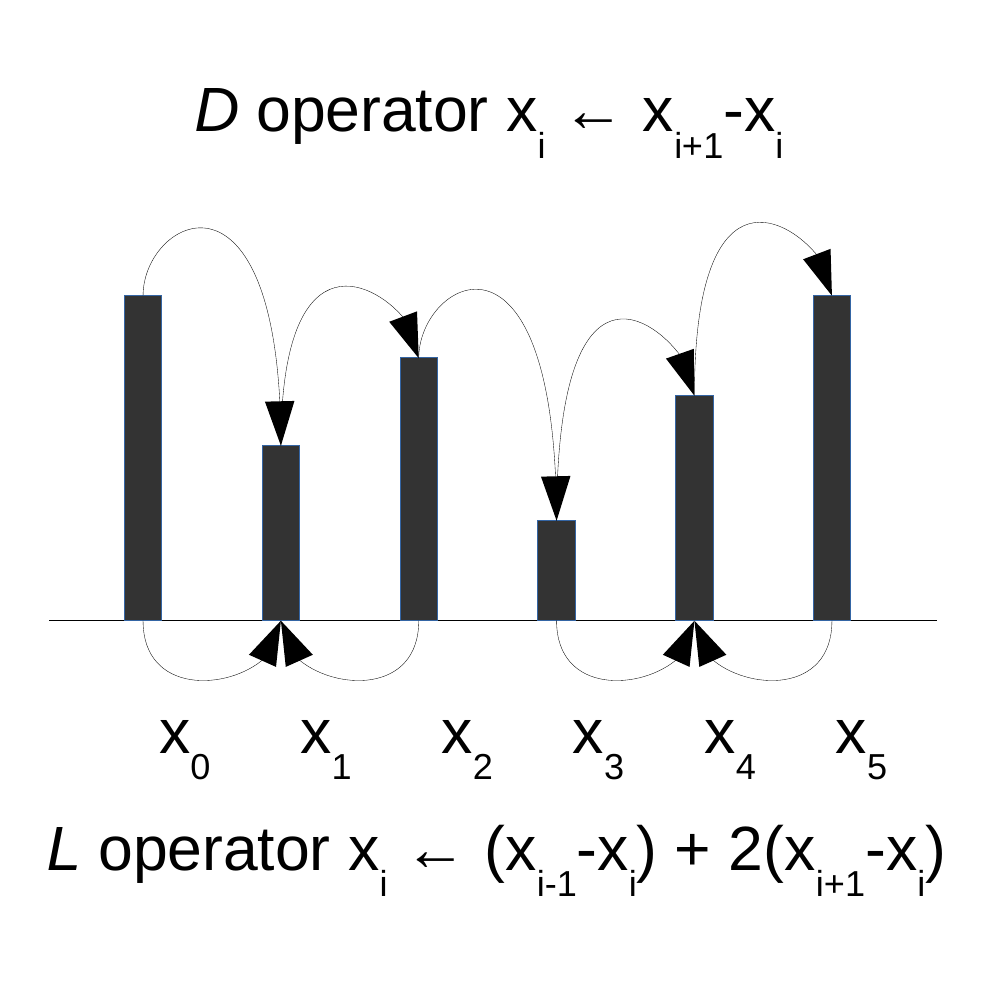}
	\caption{$D$ and $L$ operators. An arrow connecting $x_i$ and $x_j$ means that the difference $x_j-x_i$ is accumulated at the arrow end.}
	\label{fig:diffLBP}
\end{figure}
This notation also provides some facilities to understanding the successive application of $L$:
\begin{equation}
	L(L(x_i)) = H(D(L(x_{i-1}))) - 2H(D(L(x_i))).
\end{equation}
Like continuous derivative, the operator $D$ is also linear, i.e,
\begin{equation}
	D(\alpha x + \beta y) = \alpha D(x) + \beta D(y),
\end{equation}
so that inspecting one part of the difference is sufficient to provide us relevant information. Given that we focus on the term $H(D(L(x_i)))$ and, even more specifically, on $D(L(x_i))$:
\begin{equation}
	D(L(x_i)) = D(H(D(x_{i-1}))) - 2D(H(D(x_i))).
\end{equation}
As the Heaviside step function only assumes values $0$ or $1$, we have three possibilities for $D(H(D(x_i)))$: $1$, $-1$ or $0$. The first two cases imply that the function is not monotonic at that point, i.e., it changes its slope. They are also directly related to $D(D(x_i)) = D^2(x_i)$.
\begin{lemma}
	\begin{equation}
	D(H(D(x_i)))=1 \Rightarrow D^2(x_i)>0 
	\end{equation}
\end{lemma}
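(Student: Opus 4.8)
The plan is to unpack the nested operator $D(H(D(\cdot)))$ explicitly in terms of the sequence values and then read the hypothesis off as a statement about two Heaviside evaluations. Using the definition $D(x_i)=x_{i+1}-x_i$, the inner composition is $H(D(x_i))=H(x_{i+1}-x_i)$, and applying $D$ once more yields $D(H(D(x_i))) = H(x_{i+2}-x_{i+1}) - H(x_{i+1}-x_i)$. This expression involves exactly the three consecutive values $x_i,x_{i+1},x_{i+2}$, which are the same values appearing in $D^2(x_i)=x_{i+2}-2x_{i+1}+x_i$, so both sides of the implication are anchored at a common index and no reindexing is needed.

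First I would observe that, since $H$ takes only the values $0$ and $1$, a difference of two Heaviside evaluations can equal $1$ in a single way. Hence the hypothesis $D(H(D(x_i)))=1$ forces $H(x_{i+2}-x_{i+1})=1$ together with $H(x_{i+1}-x_i)=0$. Next I would translate each of these back into an inequality using the stated convention $H(x)=1\iff x>0$: the first gives $x_{i+2}-x_{i+1}>0$, i.e. $D(x_{i+1})>0$, while the second gives $x_{i+1}-x_i\leq 0$, i.e. $D(x_i)\leq 0$.

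The conclusion then follows by rewriting the second difference as $D^2(x_i)=D(x_{i+1})-D(x_i)$ and noting that it is the difference of a strictly positive quantity and a non-positive quantity, hence strictly positive. The only points requiring any care are the Heaviside convention at $0$ (so that $H=0$ is correctly read as the non-strict inequality $D(x_i)\leq 0$ rather than $<0$) and the index bookkeeping ensuring that $D(H(D(x_i)))$ and $D^2(x_i)$ refer to the same triple. Neither is a genuine obstacle: once the hypothesis is reduced to its single forced case, the implication is immediate, and no further case analysis is needed.
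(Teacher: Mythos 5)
Your proof is correct and follows essentially the same route as the paper's: unpack the hypothesis $D(H(D(x_i)))=1$ into the two forced Heaviside values, translate them into inequalities on the first differences, and conclude that the second difference is the difference of a positive and a non-positive quantity. The only differences are cosmetic --- you anchor the triple at $(x_i,x_{i+1},x_{i+2})$ with a consistent forward-difference convention while the paper centers it at $(x_{i-1},x_i,x_{i+1})$ via $D^2(x_i)=D(x_i)-D(x_{i-1})$, and you are in fact slightly more careful than the paper at the boundary case, correctly reading $H=0$ as a non-strict inequality $D(x_i)\leq 0$ where the paper writes the strict $D(x_{i-1})<0$ (the conclusion survives either way because the other inequality is strict).
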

\begin{proof}
	The combination resulting in $D(H(D(x_i)))=1$ requires $D(x_{i-1}) < 0$ and $D(x_i) > 0$, which implies 
	\begin{equation}\label{eq:p1}
		x_{i-1}>x_i \qquad \mbox{and} \qquad x_{i+1}>x_i.
	\end{equation}
	At the same time, we have for $D^2(x_i)$:
	\begin{equation}\label{eq:p2}
		D^2(x_i) = D(x_i)-D(x_{i-1}) = (x_{i+1}-x_{i}) - (x_i-x_{i-1}) = x_{i-1}+x_{i+1}-2x_i.
	\end{equation}
	From \ref{eq:p1} we have $(x_{i-1}+x_{i+1}) > 2x_i$ and consequently
	\begin{equation}
		D^2(x_i) > 0.
	\end{equation}
\end{proof}
\begin{lemma}
	\begin{equation}
	D(H(D(x_i)))=-1 \Rightarrow D^2(x_i)<0 
	\end{equation}
\end{lemma}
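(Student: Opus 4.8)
The plan is to follow the identical strategy used in the preceding lemma, exploiting the sign symmetry between the two statements. First I would unpack the hypothesis $D(H(D(x_i))) = -1$. Since $H$ takes only the values $0$ and $1$, the only way for the relevant difference of $H$-values to equal $-1$ is to have $H(D(x_{i-1})) = 1$ and $H(D(x_i)) = 0$, which forces $D(x_{i-1}) > 0$ and $D(x_i) < 0$ --- exactly the opposite sign pattern to the one appearing in the companion lemma.

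Next I would translate these two sign conditions into inequalities on the samples. From $D(x_{i-1}) = x_i - x_{i-1} > 0$ we obtain $x_{i-1} < x_i$, and from $D(x_i) = x_{i+1} - x_i < 0$ we obtain $x_{i+1} < x_i$. Geometrically this is the local-maximum configuration, mirroring the local-minimum configuration that drove the previous proof.

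Finally I would reuse the expansion $D^2(x_i) = x_{i-1} + x_{i+1} - 2x_i$ already established in \ref{eq:p2}. Adding the two inequalities yields $x_{i-1} + x_{i+1} < 2x_i$, so that $D^2(x_i) < 0$, which closes the argument.

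Because this statement is the exact mirror image of the previous lemma, I do not anticipate any genuine difficulty; the proof is essentially obtained by flipping every inequality. The only point deserving a little care is the behaviour of the Heaviside function at the origin: under the paper's convention $H(x) = 0$ for $x \leq 0$, the step from $H(D(x_{i-1})) = 1$ to $H(D(x_i)) = 0$ is what pins down the strict signs $D(x_{i-1}) > 0$ and $D(x_i) < 0$, and I would simply keep the same reading here as was used implicitly in the companion lemma.
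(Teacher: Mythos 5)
Your proof is correct and takes essentially the same route as the paper, whose entire proof of this lemma is the remark that it follows from Lemma 1 by exchanging the order of the inequalities --- exactly the mirror argument you write out in full (local maximum instead of local minimum, reusing the expansion $D^2(x_i)=x_{i-1}+x_{i+1}-2x_i$). One small caution: $H(D(x_i))=0$ only pins down $D(x_i)\leq 0$, not the strict $D(x_i)<0$ you claim (the same overstatement appears in the paper's Lemma 1), but the conclusion is unharmed since $D(x_{i-1})>0$ is strict and hence $D^2(x_i)=D(x_i)-D(x_{i-1})<0$ regardless.
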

\begin{proof}
	The proof is quite similar to that of Lemma 1, only exchanging the order of the inequalities.
\end{proof}

The case $D(H(D(x_i)))=0$ is not so straightforward, but we still have information concerning the monotonicity: $D(H(D(x_i)))=0$ implies that the function is an increasing or decreasing function at that point. For the digital image, this means absence of edges in that region.

When the application of the operator $L$ is repeated $n$ times, complex patterns caused by the non-linear characteristic of $H$ function arise, but we always have some relation with the operator $D^n$ and monotonicity of $D^{n-1}$. Works like \citep{L98} show how higher-order derivatives can be very helpful in feature detection. It essentially represents the fundamental concept of ``edge'' taken at different levels of abstraction. Edges are known since the earliest days of computer vision to be of key importance in this area and more recent approaches like the histograms of oriented gradients \citep{DT05} confirm their suitability for feature detection and recognition. The successive application and consequent abstraction yields a process also somewhat similar to what takes place in convolutional networks \citep{CMKV16} (even though in that case the operator is learned while here it is previously defined by the local binary pattern).

\section{Experiments}\label{sec:experiments}

The performance of the proposed CATex descriptors was assessed on the classification of three well-established texture databases, namely, KTH-TIPS2b \citep{HCFE04}, UIUC \citep{LSP05} and UMD \citep{XJF09}.

KTH-TIPS2b is a collection of color material pictures containing a total of 4752 images with resolution of $200\times 200$ pixels, evenly divided into 11 material classes. Each class is subdivided into 4 samples (`a', `b', `c', and `d') and each sample corresponds to specific settings of pose, scale and illumination. There are two big challenges in this data set: 1) focus on material rather than on photographed sample, i.e., the same material photographed at severely different conditions still should be assigned to the same class, resulting in high intra-class variance and impairing the performance of the texture descriptor; 2) the training/testing split scheme where one sample is selected for training and the other ones for testing, such that the algorithm should be able to recognize the material based on patterns of samples acquired under different conditions, such as illumination, scale, perspective, etc.

UIUC comprises a set of gray-level images ($256\times 256$ pixel resolution) divided into 25 classes, with 40 images in each class and acquired under uncontrolled/non-standardized conditions, with great variation in perspective, albedo, scale, and illumination. The training/testing split adopted here follows the classical protocol in the literature, corresponding to half of the samples randomly selected for training and half for testing. Such random division is repeated 10 times to provide statistical measures of average accuracy and standard deviation.

UMD shares similarities with UIUC, like the number of classes and images per class and the acquisition process under uncontrolled conditions. However the images in this case have higher resolution of $1280\times 960$ pixels. The testing/training protocol is also the same one adopted for UIUC.

\section{Results and Discussion}

The identification of image classes in each database was accomplished by using the proposed CATex descriptors as input to a machine learning classifier. Here we verify the performance of three well known classifiers in the literature, to know, support vector machines (SVM) \citep{CV95}, linear discriminant analysis (LDA) \citep{M04} and random forests (RF) \citep{H95}. 

For SVM we used the strategy ``one-against-all'' to allow its application to a multi-class problem. We also employed linear kernels, as more complex solutions have not provided significantly higher accuracy. The most impacting hyper-parameters were found out to be the decision margin $C$ in SVM and the number of decision trees in RF.  $C$ was determined by a grid search over a 5-fold split of the training set. The number of trees was obtained by setting a maximum value of $500$ and evaluating the ``error out of bag'', using the number of trees sufficient to stabilize that error. Given the high dimensionality of the proposed feature vector, we also applied a preprocessing stage of principal component analysis \citep{J86}. The number of principal components was also determined from the 5-fold nested cross-validation using only training samples. For the proposed method, we used $\alpha=0.10$ in this test. Table \ref{tab:classif} shows the average accuracy after using a number of rounds as specified for each database in Section \ref{sec:experiments}. 

LDA provided the best result in all cases. The good performance of LDA was expected as the combination of canonical correlation analysis (supervised) with Bayes theorem makes it more appropriate to work with multi-class problems with reasonable linear separability, i.e., when the descriptors work as expected \citep{M04}. Based on these results, we used LDA for all the compared data sets.
\begin{table}[!htpb]
	\centering
	\caption{Classification accuracy using SVM, LDA, and RF classifier.}
	\label{tab:classif}
	\begin{tabular}{cccc}
		\hline
			Database & SVM & LDA & RF\\	
		\hline
			KTH-TIPS2b & 60.0$\pm$3.5 & 66.7$\pm$4.0 & 64.5$\pm$1.8\\
			UIUC & 95.7$\pm$0.9 & 98.3$\pm$0.8 & 96.5$\pm$0.7\\
			UMD & 98.3$\pm$0.9 & 99.4$\pm$0.7 & 98.5$\pm$0.6\\
		\hline
	\end{tabular}
\end{table}

The most important hyperparameter in the proposed method is $\alpha$, as it controls the activation of the transition function. In practice, it controls the ``chaoticity'' of the entire process: the higher the value of $\alpha$ the more chaotic is the CA evolution. Figure \ref{fig:alpha1} exhibits the accuracy on the benchmark datasets when using $\alpha=0.05$, $\alpha=0.10$ and $\alpha=0.25$. In general, there is no remarkable discrepancy between the accuracies for different values. For consistency, we employed $\alpha=0.10$ for the next tests.
\begin{figure}[!htpb]
	\centering
	\includegraphics[width=.7\textwidth]{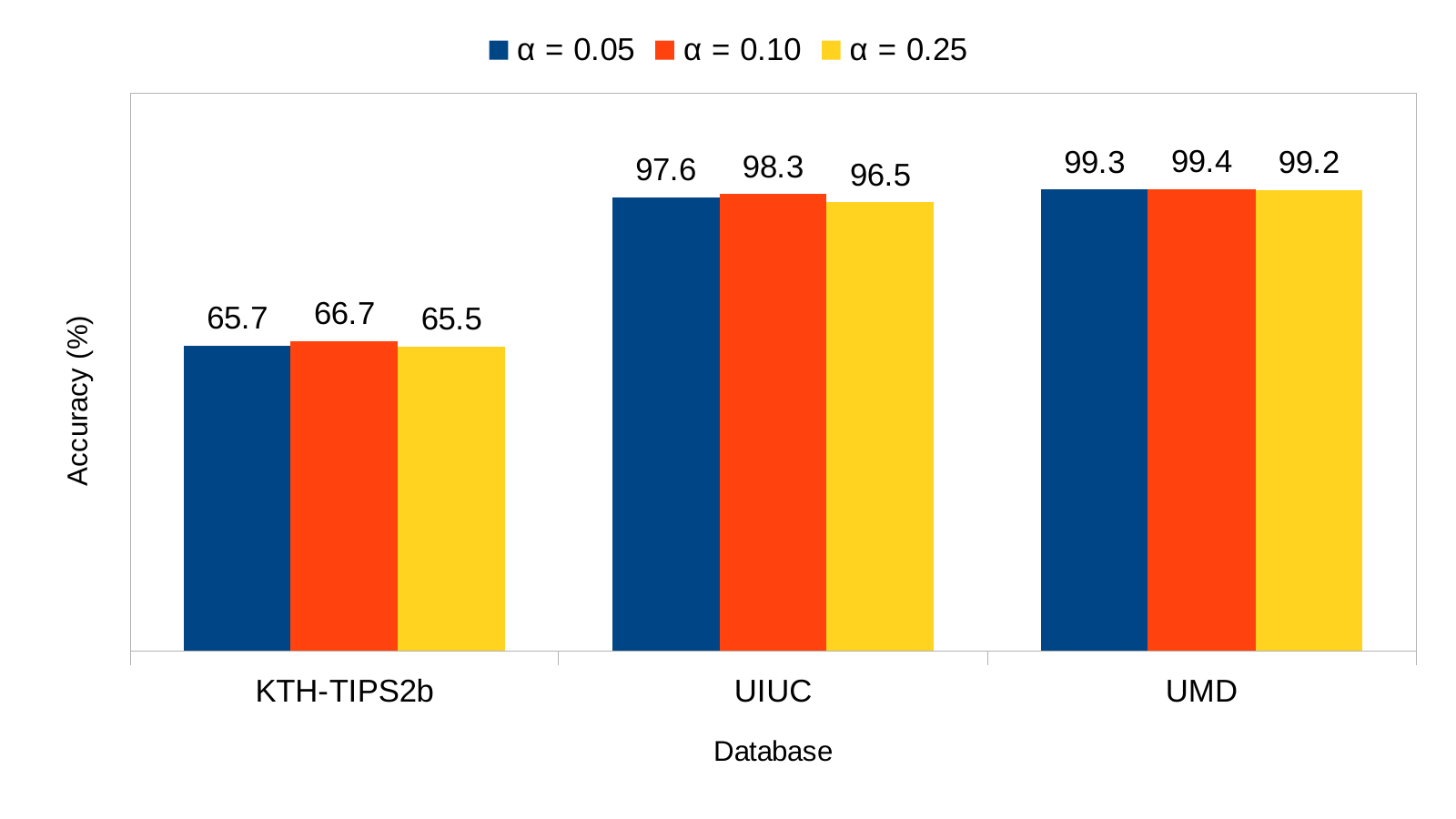}
	\caption{Accuracies in KTH-TIPS2b, UIUC and UMD using $\alpha$ values $0.05$, $0.10$ and $0.25$.}
	\label{fig:alpha1}
\end{figure}

Table \ref{tab:SRdatabase} lists the accuracy of CATex descriptors on the benchmark data sets, in comparison with other texture recognition approaches in the literature. The proposed descriptors achieved accuracy close to 100\% in UIUC and UMD, as expected from previously reported results achieved by state-of-the-art approaches in this task. Furthermore, it also reached competitive results in KTH-TIPS2b. The classification of this database is a significantly more challenging problem, especially considering the adopted protocol for training/testing that forces the algorithm to recognize materials based on training samples acquired under quite different conditions. Here it is also important to note that CATex does not take into account color information, which adds important discriminative features, but even under these circumstances, CATex still achieves good accuracy.
\begin{table}[!htpb]
	\centering
	\caption{Accuracy of the proposed descriptors compared with other texture descriptors in the literature. All the results except for the proposed CATex were obtained from the literature. A `-' indicates that no result was published for that method on that database.}
	\label{tab:SRdatabase}
	\begin{tabular}{cccc}
		Method & KTH-TIPS2b & UIUC & UMD\\
		\hline
		VZ-MR8 \citep{VZ05} & 46.3 & 92.9 & - \\
		LBP \citep{OPM02} & 50.5 & 88.4 & 96.1\\
		VZ-Joint \citep{VZ09} & 53.3 & 78.4 & - \\
		LBP-FH \citep{AMHP09} & 54.6 & - & - \\
		CLBP \citep{GZZ10} & 57.3 & 95.7 & 98.6 \\
		ELBP \citep{LZLKF12} & 58.1 & - & - \\
		SIFT + KCB \citep{CMKMV14} & 58.3 & 91.4 & 98.0\\
		SIFT + BoVW \citep{CMKMV14} & 58.4 & 96.1 & 98.1\\
		SIFT + VLAD \citep{CMKMV14} & 63.1 & 96.5 & 99.3\\
		RandNet (NNC) \citep{CJGLZM15} & 60.7\footnotemark[1] & 56.6 & 90.9 \\
		PCANet (NNC) \citep{CJGLZM15} & 59.4\footnotemark[1] & 57.7 & 90.5 \\
		BSIF \citep{KR12} & 54.3 & 73.4 & 96.1 \\			
		LBP$_{riu2}$/VAR \citep{OPM02} & 58.5\footnotemark[1] & 84.4 & 95.9 \\
		ScatNet (NNC) \citep{BM13} & 63.7\footnotemark[1] & 88.6 & 93.4 \\
		MRS4 \citep{VZ09} & - & 90.3 & - \\
		FC-CNN AlexNet \citep{CMKV16} & 71.5 & 91.1 & 95.9\\
		MFS \citep{XJF09} & - & 92.7 & 93.9 \\
		DeCAF \citep{CMKMV14} & 70.7 & 94.2 & 96.4\\
		FC-CNN VGGM \citep{CMKV16} & 71.0 & 94.5 & 97.2\\			
		Scattering\footnote{multiscale train} \citep{SM13} & - & 99.4 & 96.6\\
		(H+L)(S+R) \citep{LSP05} & - & 97.0 & 97.0\\			
		SIFT+LLC \citep{CMKV16} & 57.6 & 96.3 & 98.4\\
		WMFS \citep{XYLJ10} & - & 98.6 & 98.7\\			
		OTF \citep{XHJF12} & - & 98.1 & 98.8\\
		PLS \citep{QXSL14} & - & 96.6 & 99.0\\			
		\hline
		CATex (Proposed) & 66.7 & 98.3 & 99.4\\
		\hline		
	\end{tabular}
\end{table}

Figure \ref{fig:CM1} presents another important information concerning a classification task, which is the confusion matrix. As expected from the accuracies in Table \ref{tab:SRdatabase}, the diagonals of UIUC and UMD matrices are almost perfectly black, which corresponds to an accuracy close to 100\%. KTH-TIPS2b, on the other hand, has several gray squares outside the diagonal, corresponding to misclassified samples. Particularly, class 5 (``cotton'') is highly confused with 11 (``wool''), class 3 (``corduroy'') with 7 (``lettuce leaf''), and class 10 (``wood'') with 6 (``cracker''). The first case is not surprising as both groups contain types of fabric. The other ones are obviously different materials, but they are also composed by similar patterns. Color attribute could potentially be helpful for the material identification in these cases.
\begin{figure}[!htpb]
	\begin{tabular}{c}
		\begin{tabular}{cc}
			\includegraphics[width=.45\textwidth]{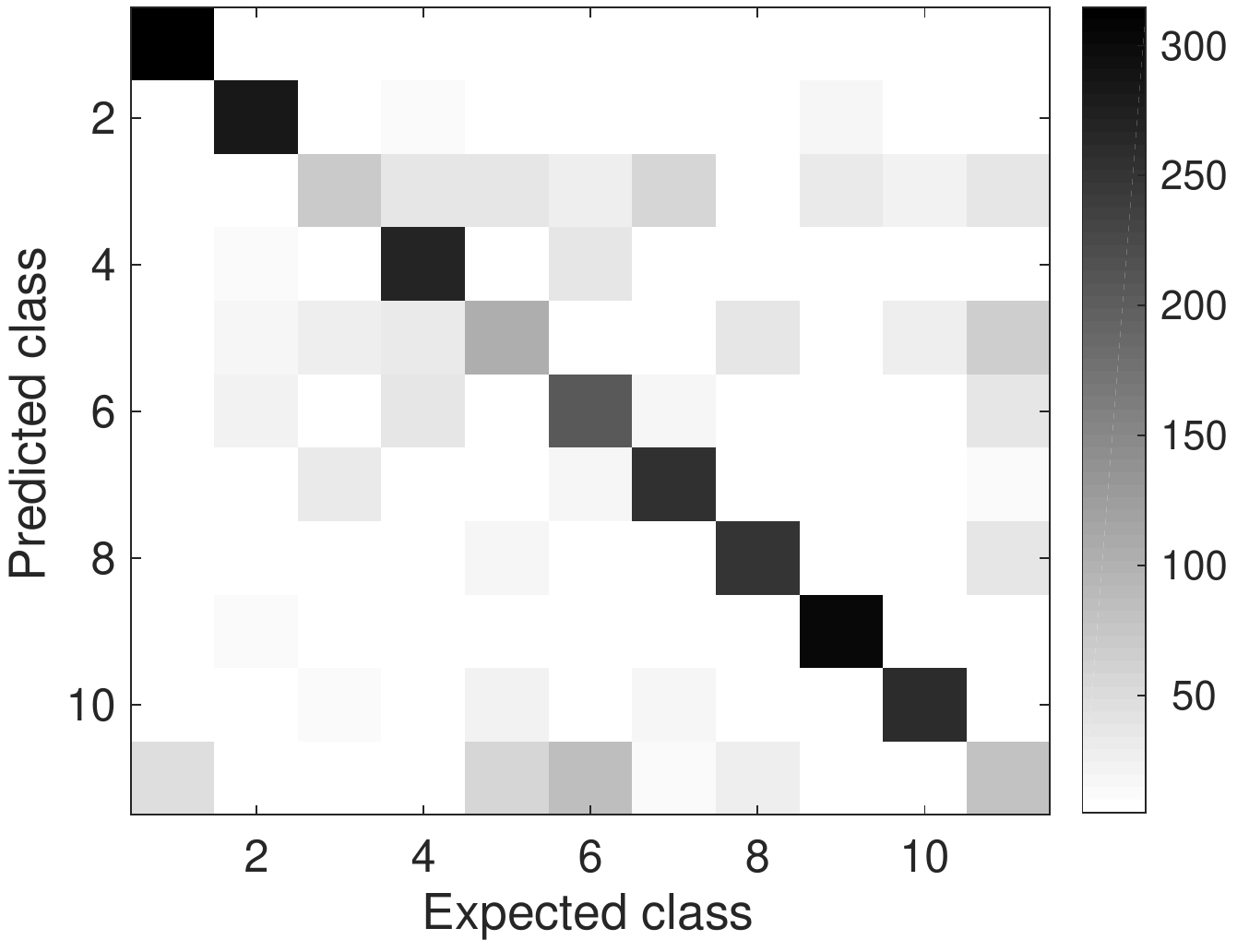} &
			\includegraphics[width=.45\textwidth]{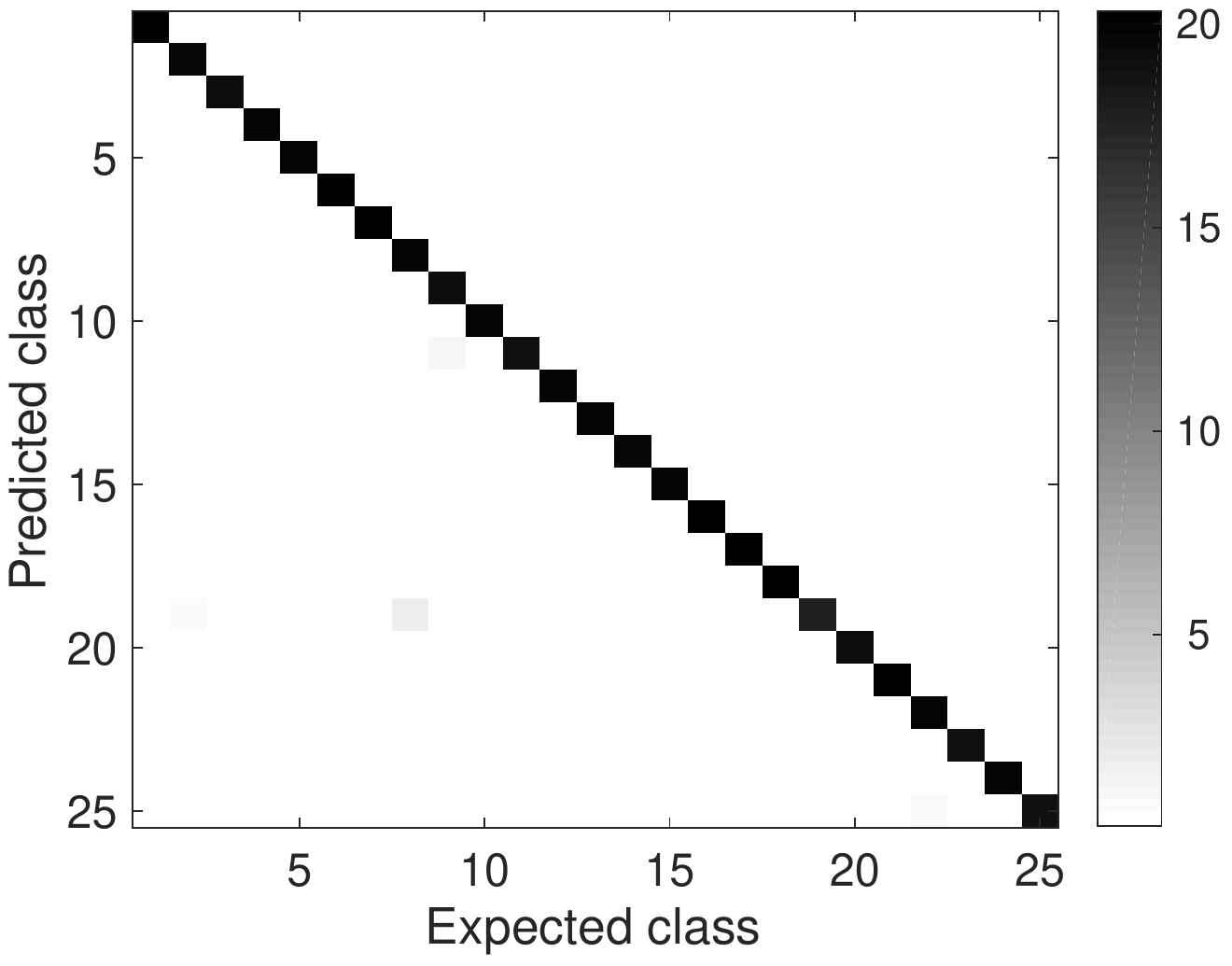}\\
			(a) & (b)\\
		\end{tabular}\\
		\begin{tabular}{c}
			\includegraphics[width=.45\textwidth]{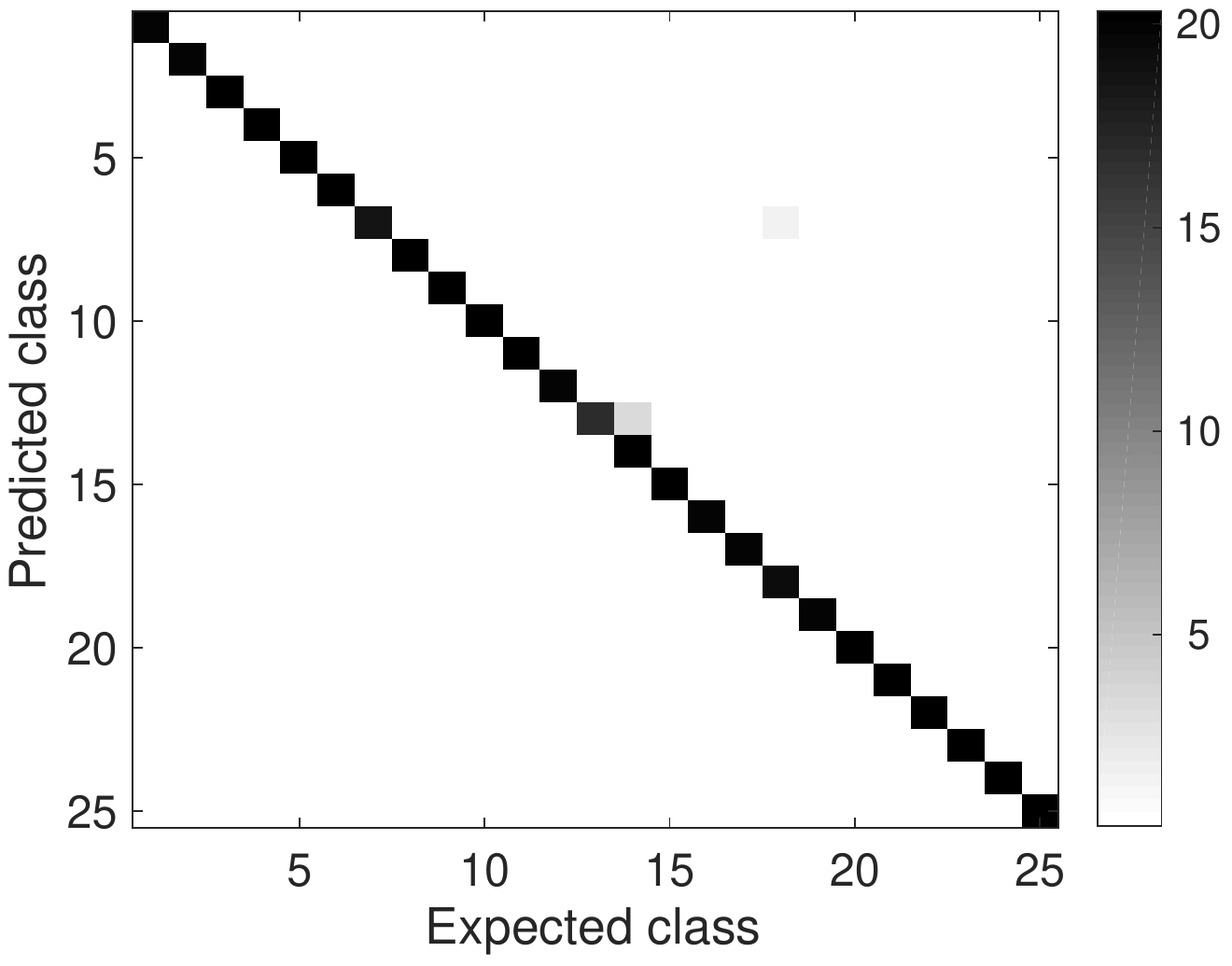}\\
			(c)\\
		\end{tabular}
	\end{tabular}
	\caption{Confusion matrices. (a) KTH-TIPS2b. (b) UIUC. (c) UMD.}
	\label{fig:CM1}
\end{figure}

The obtained results confirmed the expectations about the performance of CATex descriptors as a highly discriminative method as theoretically established in Section \ref{sec:motivation}. Reaffirming trends recently presented in the literature \citep{BM13,F20}, on the effectiveness of successive applications of nonlinear operator for texture recognition, the cascade application of nonlinear local operators has demons\-trated to be a powerful strategy for texture description. The discriminative features of the analyzed images are captured at different levels of abstraction, i.e., from micro to macro-texture regions. This mechanism somehow reminds what happens, for example, in the forward action of convolutional neural networks and explains the success of those networks in image recognition to a large extent. Such multi-level abstraction analysis is also the responsible here for the achieved success even in more complex tasks and makes the designed model an adequate framework for texture analysis.

\subsection{Application to the Identification of Plant Species}

To illustrate the effectiveness of the proposed methodology in an applied problem, we evaluate the CATex descriptors in the identification of species of Brazilian plants using images scanned from the leaf surface. The database, named 1200Tex \citep{CMB09}, comprises 1200 images, divided into 20 classes (plant species). 20 samples are collected \textit{in vivo} from each species, washed and aligned with the basal/apical axis. The photograph of each sample is split into 3 non-overlapping windows with resolution $128 \times 128$, totalizing 60 images per species. The split of training and testing set is identical to that one adopted for UIUC and UMD, i.e., half images randomly selected for training and the remaining half images for testing. Such process is repeated 10 times to provide average accuracy and deviation.

Figure \ref{fig:alpha2} shows the classification performance for different values of $\alpha$ hyperparameter. Like before, the difference is not so remarkable, although some advantage can be noticed for $\alpha=0.05$.
\begin{figure}[!htpb]
	\centering
	\includegraphics[width=.7\textwidth]{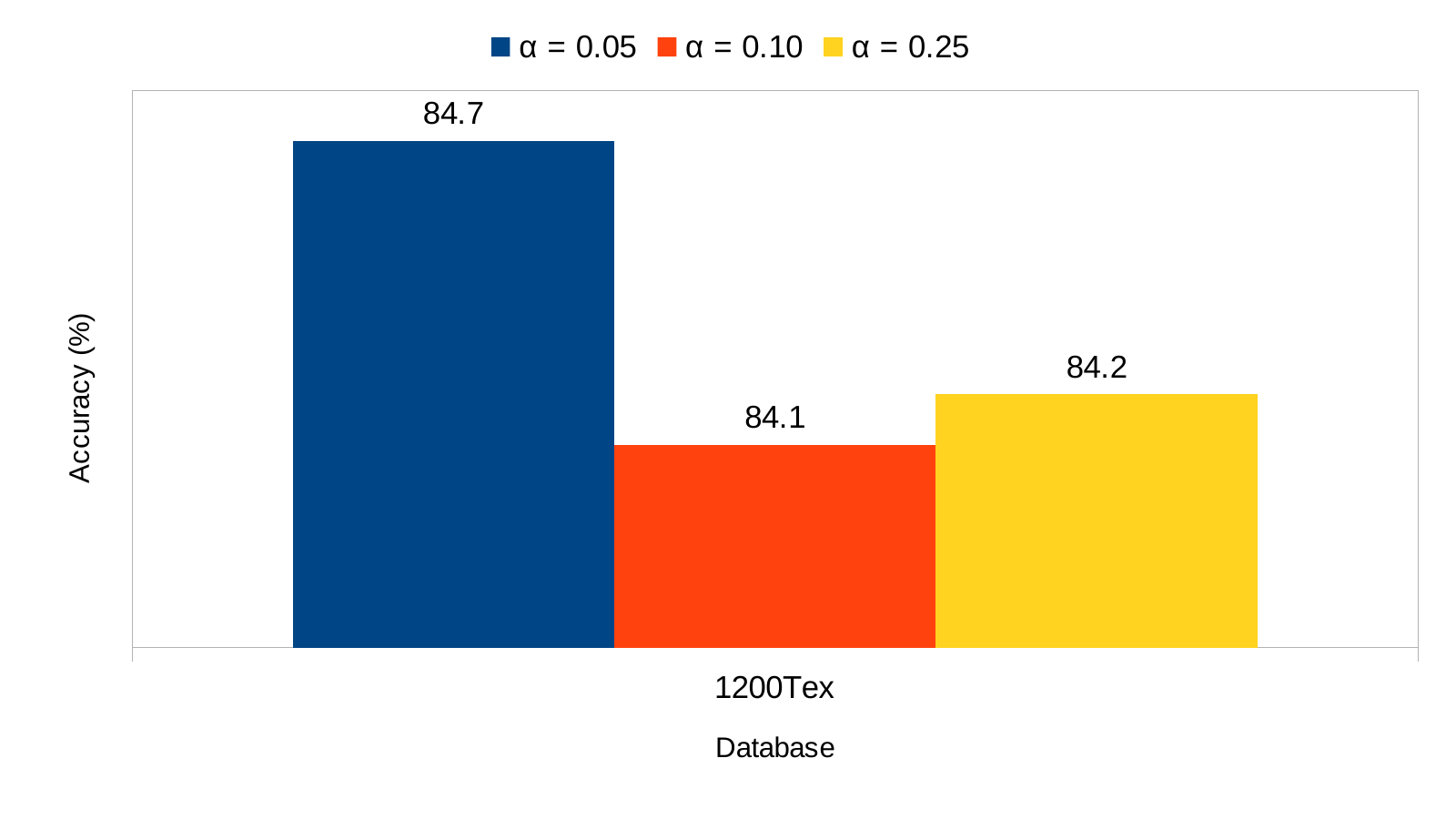}
	\caption{Accuracies in 1200Tex using $\alpha$ values $0.05$, $0.10$ and $0.25$.}
	\label{fig:alpha2}
\end{figure}

Table \ref{tab:SRdatabase_plant} shows the accuracy for CATex and other results on this database recently published in the literature. Similarly to what was observed in the benchmark databases, the proposed descriptors is competitive in the discrimination of plant species. The success in this type of task gives some evidences about the potential of our approach in a practical task. In fact, the identification of plant species using leaf images is a challenging test for any texture descriptor, considering the high inter-group similarity, which makes it a pro\-blem whose solution is nearly impossible for the human eyes. The values depicted in Table \ref{tab:SRdatabase_plant} confirm that the successive extraction of local features following a CA framework constitutes a suitable model to represent abstract features of the image that classical descriptors cannot access in their representation. 
\begin{table}[!htpb]
	\centering
	\caption{Accuracy of the proposed CATex descriptors compared with other results in the identification of plant species.}
	\label{tab:SRdatabase_plant}	
	\begin{tabular}{cc}
		\hline
		Method & Accuracy (\%)\\
		\hline
		LBPV \citep{GZZ10} & 70.8\\
		Network diffusion \citep{GSFB16} & 75.8\\
		FC-CNN VGGM \citep{CMKV16} & 78.0\\		
		Gabor \citep{CMB09} & 84.0\\
		FC-CNN VGGVD \citep{CMKV16} & 84.2\\
		\hline
		CATex (Proposed) & 84.7\\
		\hline
	\end{tabular}
\end{table}

Finally, Figure \ref{fig:CM2} depicts the confusion matrix for the proposed descriptors in 1200Tex database. The most difficult classes were 5 and 7, which were confused with each other. Figure \ref{fig:conf1200} illustrates a few samples from those groups. The images are pretty similar, sharing common homogeneous patterns. Nervure patterns are important features do distinguish plant species using leaves and those structures cannot be identified in the images of these groups.
\begin{figure}[!htpb]
	\centering
	\includegraphics[width=.7\textwidth]{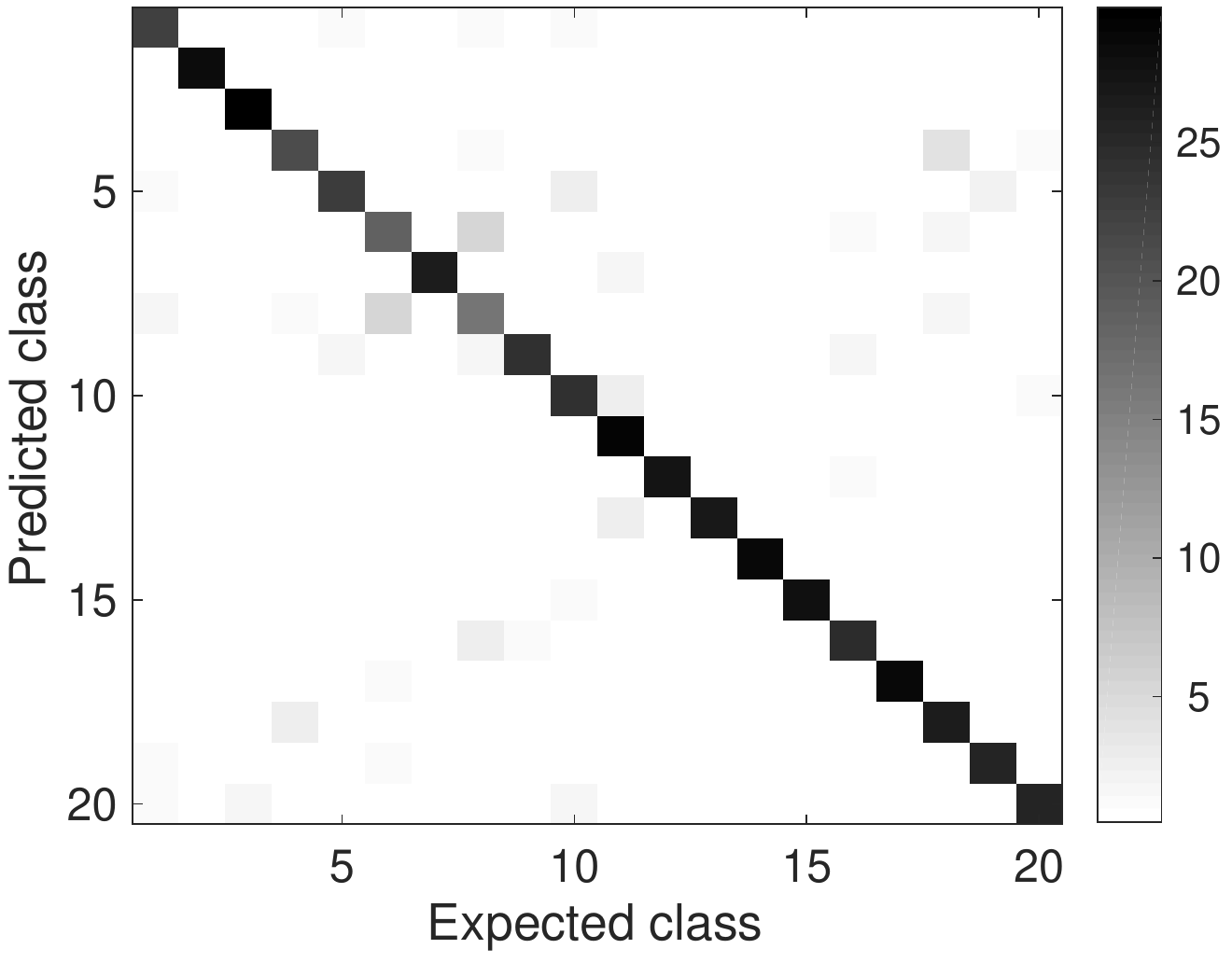}
	\caption{Confusion matrix for the plant database.}
	\label{fig:CM2}
\end{figure}
\begin{figure}[!htpb]
	\centering
	\includegraphics[width=.7\textwidth]{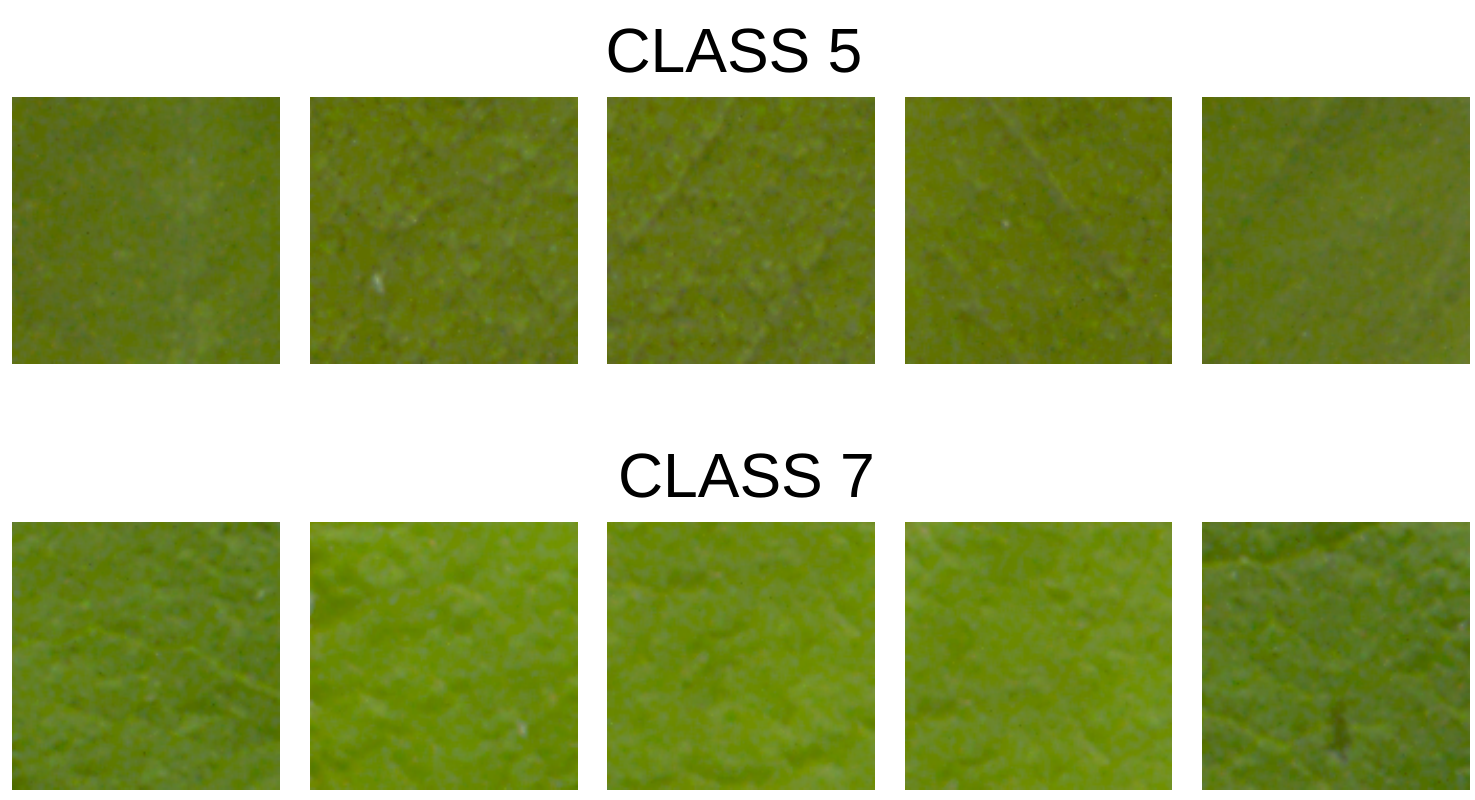}
	\caption{Classes most frequently confused in 1200Tex.}
	\label{fig:conf1200}
\end{figure}

To summarize, the results attest that the proposed method can also be safely applied to a real world problem, here the recognition of plant species based on leaf images. The accuracy close to 85\% is quite relevant if one takes into account the complexity of the task. Whereas in the benchmark database a human user, not necessarily specialist, could recognize the classes, discriminating species u\-sing leaves is rather difficult even for a botany specialist. The achieved accuracy confirms once more how much the machine algorithm can help the user in such types of tasks. Besides, we can also see that a ``handcrafted'' descriptor can be competitive with deep learning approaches like, for example, the FC-CNN VGGM and FC-CNN VGGVD methods in Table \ref{tab:SRdatabase_plant}.

\section{Conclusions}

This work proposed and investigated the performance of CATex texture descriptors, obtained from a cellular automata model in which the transition function is characterized by a local texture feature, derived from the well known local binary patterns.

The effectiveness of CATex features were assessed both on the classification of benchmark databases and on a practical problem, that of identifying species of Brazilian plants based on the texture of the leaf surface. In either situations, CATex confirmed its potential as powerful and robust descriptors, outperforming several state-of-the art approaches in terms of classification accuracy.

The developed methodology relies on the already known ability of recursive nonlinear operators in identifying complex patters in digital images. This characteristic allowed for a precise and robust texture description, as attested by the performance on databases like KTH-TIPS2b, which is highly affected by different types of intra-class variances. In summary, such robustness is also a favorable point to consider the proposed method as a natural candidate to model the numerous real-world problems involving texture description at some point.

\section*{Acknowledgements}

This work was supported by the Serrapilheira Institute (grant number Serra-1812-26426). J. B. Florindo also gratefully acknowledges the financial support from National Council for Scientific and Technological Development, Brazil (CNPq) (Grants \#301480/2016-8 and \#423292/2018-8).

\section*{References}


\end{document}